\newtheorem{theorem}{Theorem}
\def\BibTeX{{\rm B\kern-.05em{\sc i\kern-.025em b}\kern-.08em
    T\kern-.1667em\lower.7ex\hbox{E}\kern-.125emX}}
\begin{document}

\title{Sparse Interpretable Deep Learning with LIES Networks for Symbolic Regression\\
}

\author{
\IEEEauthorblockN{Mansooreh Montazerin}
\IEEEauthorblockA{
Department of Electrical \& Computer Engineering\\
University of Southern California\\
}
\and
\IEEEauthorblockN{Majd Al Aawar}
\IEEEauthorblockA{
Department of Electrical \& Computer Engineering\\
University of Southern California\\
}
\and
\IEEEauthorblockN{Antonio Ortega}
\IEEEauthorblockA{
Department of Electrical \& Computer Engineering\\
University of Southern California\\
}
\and
\IEEEauthorblockN{Ajitesh Srivastava}
\IEEEauthorblockA{
Department of Electrical \& Computer Engineering\\
University of Southern California\\
}
}


\maketitle
\IEEEpeerreviewmaketitle

\begin{abstract}
Symbolic regression (SR) aims to discover closed-form mathematical expressions that accurately describe data, offering interpretability and analytical insight beyond standard black-box models. Existing SR methods often rely on population-based search or autoregressive modeling, which struggle with scalability and symbolic consistency. We introduce LIES (Logarithm, Identity, Exponential, Sine), a fixed neural network architecture with interpretable primitive activations that are optimized to model symbolic expressions. We develop a framework to extract compact formulae from LIES networks by training with an appropriate oversampling strategy and a tailored loss function to promote sparsity and to prevent gradient instability. After training, it applies additional pruning strategies to further simplify the learned expressions into compact formulae. Our experiments on SR benchmarks show that the LIES framework consistently produces sparse and accurate symbolic formulae outperforming all baselines. We also demonstrate the importance of each design component through ablation studies. 
\end{abstract}

\begin{IEEEkeywords}
Symbolic Regression, Interpretable Machine Learning, Scientific Discovery, Neural Network Pruning
\end{IEEEkeywords}

\section{Introduction}\label{sec:intro}

Uncovering the underlying mathematical laws that govern complex systems is a fundamental goal in science and engineering. Symbolic Regression (SR) serves as a powerful approach to this task by discovering abstract mathematical expressions that best describe relationships within observed data and provide insights into the mechanism of the underlying processes~\cite{physics2023}. 
Unlike conventional regression techniques that fit parameters within predefined models, SR searches for both the structure and parameters of equations, offering a flexible and interpretable way to model intricate phenomena~\cite{marin2023}. This ability makes SR highly valuable across disciplines, enabling the derivation of governing equations in fields such as fluid mechanics, molecular interactions, astrophysics, and materials science~\cite{sr-astro,sr-pde,sr-dynamic,sr-material}.

A key advantage of SR is its interpretability. Resulting models are expressed in symbolic form, making them understandable and generalizable beyond the training data~\cite{cranmer-interp}. These models can be derived from experimental measurements, simulations, or real-world observations, allowing researchers to gain insights into the underlying mechanisms of physical systems~\cite{aifeynman}. 
However, discovering meaningful symbolic representations is inherently challenging due to the vast combinatorial search space of potential equations. Finding a balance between accuracy, simplicity, and generalization is crucial, as overly complex models risk overfitting while overly simplistic ones may fail to capture essential system dynamics~\cite{lacava}.

SR spans a wide range of methods from traditional approaches like Genetic Programming (GP) to advanced Deep Learning (DL) methods like Transformers, Graph Neural Networks (GNNs), and deep generative models~\cite{sr-gp2009,sr-gp2022,sr-gp2023,sr-trans2023,sr-trans2024,cranmer-interp,sr-gen2023}. Evolutionary algorithms like GP~\cite{sr-gp2023} explore the space of mathematical expressions by iteratively refining candidate solutions. These equations are constructed using fundamental components, known as primitives, which can include constants and elementary functions like addition, multiplication, and trigonometric operations. However, these methods use a predefined set of operations and construct candidate expressions through a sequential generation and evaluation process, which often leads to overly complex and less interpretable equations. 
Alternatively, recent DL methods~\cite{sr-trans2023,cranmer-interp,sr-gen2023} aim to generate candidate formulae by treating expressions as sequences, structured graphs, or samples from learned latent spaces, enabling better search for candidate expressions compared to traditional GP-based methods.
These approaches offer better scalability and adaptability but face several limitations: they often require large amounts of training data, struggle with generating syntactically valid or semantically meaningful expressions, and offer limited interpretability during training, as symbolic forms are only revealed after decoding.

To address these challenges, we propose a new framework that uses a neural network architecture while preserving the interpretability of traditional GP.
Specifically, it uses stacking layers of a small set of operators as activations in such a way that any candidate formula can be represented by sparsification and pruning the network. Therefore, it reduces the problem of learning the best candidate to the problem of learning compact and sparse neural networks.
Our model, LIES, is a feedforward architecture with a fixed set of activation functions --- bounded logarithm (L) and exponential (E), sine (S), and identity (I) --- carefully chosen to capture operations commonly found in natural and scientific laws, such as multiplication, division, exponentiation, and periodic behavior. Unlike GP-based methods that generate expressions through iterative search, LIES trains a fixed network whose structure and activation functions are explicitly aligned with symbolic operations. This design enables the model to recover interpretable expressions via structured pruning, while retaining the data efficiency and expressiveness of neural networks. In contrast to black-box DL models, LIES incorporates symbolic structure directly into the architecture, requiring less training data and producing more compact, meaningful formulae.
The overall framework progressively sparsifies the trained network to distill a final symbolic expression that is both concise and interpretable.

Specifically, our main contributions are as follows:

\begin{itemize}
    \item We propose a novel architecture that incorporates a specific set of activation functions (L, I, E, and S) reflecting operations common in natural laws to achieve interpretable symbolic structures. The model is trained with a modified loss function that helps avoid unstable gradients due to logarithm and exponential functions.
    
    \item We introduce a pipeline that promotes sparsity and interpretability through a sparsity-aware loss function, combined with model pruning and symbolic simplification. 
    \item We evaluate our method on 61 formulae from the AI Feynman dataset~\cite{aifeynman}, showing strong symbolic and numerical performance, and provide an ablation study to isolate the contribution of each component in the pipeline.
\end{itemize}


\section{Related Works}\label{sec:related}
Symbolic regression (SR) is a foundational technique for discovering underlying mathematical relationships from data. It aims to automatically generate symbolic expressions or closed-form equations that best explain the observed input–output behavior. Genetic programming (GP) plays a central role in SR, evolving populations of candidate expressions through iterative processes such as mutation, crossover, and selection. 
Pioneering work by John Koza~\cite{koza-genetic1994} laid the foundation for GP-based SR, evolving expressions through mimicry of biological selection. Subsequent works aimed to improve the efficiency of GP and its applicability to a wider range of data types. For example, Gustafson et al.~\cite{gustafson-imp2005} improved GP by preventing crossover between individuals with identical fitness, which often produced redundant offspring. This simple constraint led to better performance, especially on more complex SR tasks. 
PySR~\cite{sr-gp2023} represents a more recent and performant GP-based SR system. It introduces a multi-population evolutionary algorithm with an evolve–simplify–optimize loop, enabling the discovery of concise and interpretable expressions, especially for scientific datasets. Nonetheless, GP-based methods in general remain computationally expensive, sensitive to hyperparameter settings, and prone to generating overly complex or redundant expressions due to their heuristic nature.  

In contrast, deep learning (DL)-based SR approaches leverage the representation learning capabilities of neural networks to model symbolic mappings more flexibly. Petersen et al.~\cite{sr-dl2019} introduced a deep reinforcement learning approach to SR, where a Recurrent Neural Network generates expressions sequentially. Biggio et al.~\cite{biggio-seq2seq2020} proposed a seq2seq model mapping input-output pairs to a symbolic expression built from a vocabulary. SR can also be deemed as surrogate modeling, offering interpretable and tractable approximations for complex, expensive, or black-box functions. Gaussian process surrogates~\cite{gaussian2006} and Physics-Informed Neural Networks (PINNs)~\cite{pinn2022} are two prominent examples.

Despite these advances, both GP-based and DL-based methods face persistent challenges: GP methods often rely on inefficient, heuristic-driven search and yield expressions that are unnecessarily complex or poorly generalizing, while DL methods tend to operate as black boxes, requiring substantial data and offering limited symbolic interpretability or structure. We propose a novel DL-based framework that bridges the strengths of both paradigms. Rather than generating expressions explicitly or relying on symbolic decoding, we train a fixed feedforward neural network—called LIES—composed of activation functions selected to reflect symbolic operations found in natural laws. This architecture encodes symbolic structure directly into the network, enabling expression discovery through structured pruning and gradient-based sparsification. Our approach eliminates the need for evolutionary search, retains data efficiency, and produces compact, interpretable formulae.

EQL$\div$\cite{eql2018} employs a neural architecture with fixed activation functions designed to resemble symbolic operations, including both unary (e.g., identity, sine, cosine) and binary (e.g., multiplication, division) operators. 
Yet, this architecture has some constraints, such as division being restricted to the final layer, and the class of expressions it can represent is limited to rational combinations of polynomial and trigonometric functions. It provides no framework for obtaining compact formulae and struggles to learn expressions with many simple but ubiquitous operations like exponents, logarithms, and roots~\cite{sr-dl2019}.
In contrast, our network, composed solely of \textit{unary} operators—L, I, E, and S—acts as a universal approximator (see \autoref{subsec:arch}) that can flexibly represent a wider range of natural laws. We propose a carefully designed loss function and a framework that includes sparsification, pruning, and symbolic simplification, enabling accurate learning of a large class of compact formulae.


\section{Methodology}\label{sec:method}

\subsection{Preliminaries}\label{subsec:prel}

In data-driven equation discovery, commonly referred to as symbolic regression (SR), our objective is to find a compact and interpretable mathematical expression $\hat{f}$ that closely approximates an unknown target function \( f: \mathbb{R}^d \to \mathbb{R},\) which maps a 
$d$-dimensional input vector $\mathbf{x}$ to an output $y$. 
Given a dataset of $n$ samples \(D = \{(\mathbf{x}_i, y_i)\}_{i=1}^{n}\), SR tries to find the underlying mathematical relationship between the input-output pairs such that $\hat{f}(\mathbf{x}_i)\approx y_i$ for all data samples. Apart from fitting the observed data accurately, the discovered equation should be interpretable and capable of generalizing well to unseen inputs, ensuring its utility beyond the training dataset.

\subsection{The Proposed LIES Architecture}\label{subsec:arch}

Unlike prior approaches that generate mathematical expressions through population-based search or sequential modeling, we propose a fundamentally different strategy: We design a fixed neural network—called the LIES network—whose structure and activation functions are crafted to naturally simplify into symbolic expressions, and develop a training strategy to favor sparsity. Our primary hypothesis is that most laws of nature can be expressed using a small set of primitive functions, i.e., logarithm ($L$), identity ($I$), exponential ($E$), and sine ($S$), applied to inputs and constants. 
For instance, simple ubiquitous arithmetic operations, such as the multiplication and division of two quantities $a$ and $b$ are given by $\exp(\ln(a) + \ln(b))$ and $\exp(\ln(a) - \ln(b))$, respectively. Note that $\cos(a)$ can be written as $\sin(\pi/2 - a)$, and therefore, any trigonometric function can also be represented using the sine function. Other operations like inverse trigonometric functions, while rare in natural laws, can still be approximated by polynomials using the Taylor series. The proposed LIES architecture (represented in \autoref{fig:LIES}) is a multi-layer feed-forward network with exactly four neurons per layer, each corresponding to one of the primitive activation functions. An $L$-layer configuration consists of $L-1$ hidden layers for which there is a linear mapping followed by non-linear transformations. The output \( \mathbf{z}^{(i)} \) of the \( i^\text{th} \) layer can be represented as:
\begin{align}
    \mathbf{z}^{(i)} &= f\left(\mathbf{h}^{(i)}\right), \\
    \mathbf{h}^{(i)} &= \mathbf{W}^{(i)} \mathbf{z}^{(i-1)},
\end{align}
where $f$ is the non-linear activation function, $\mathbf{W}^{(i)}$ is the weight matrix, $\mathbf{h}^{(i)}$ is the vector of pre-activation units and $\mathbf{z}^{(0)}=\mathbf{x}$ is the input data. Since this is a regression task, the activation function for the final layer is deleted, and we have the output as ${\hat{y}} = \mathbf{W}^{(L)} \mathbf{z}^{(L-1)}$. 

Additionally, we include fully dense residual connections between all the layers of the network to improve gradient flow and help the network to learn a wider variety of mathematical expressions in the output. Here, we do not consider any bias terms for the layers, but we add a vector with all entries equal to $1$ in the input layer to act as the bias.

\begin{figure}[htbp]
\centering
  \includegraphics[trim=0 13 0 13, clip, width=0.65\linewidth]{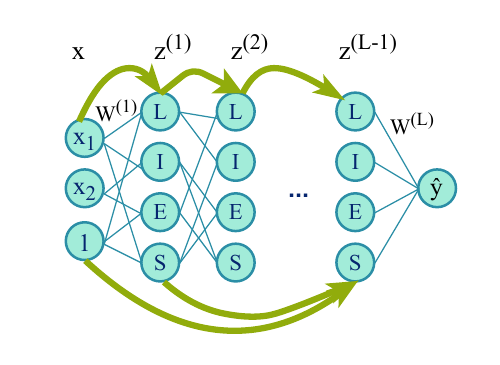}
  \caption{Architecture of the proposed LIES network}
  \label{fig:LIES}
\end{figure}

The expressive power of our proposed architecture is justified by the following theorem.

\begin{theorem}
\textit{LIES Networks are universal approximators.}
\end{theorem}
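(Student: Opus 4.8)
The plan is to reduce the statement to the classical universal approximation theorem and then verify that the particular LIES wiring is rich enough to realize the approximating family on a compact domain $K \subset \mathbb{R}^d$. The most direct route exploits the sine primitive: by the non-polynomial universal approximation theorem (Leshno et al.), a single hidden layer built from any continuous, non-polynomial activation is dense in $C(K)$, and since $\sin$ is continuous and non-polynomial, the finite trigonometric sums $g_N(\mathbf{x}) = \sum_{k=1}^{N} c_k \sin(\mathbf{w}_k \cdot \mathbf{x} + b_k)$ are dense in $C(K)$. It therefore suffices to show that every such $g_N$ is exactly representable, or approximable to arbitrary accuracy, by a LIES network of sufficient depth, so that \emph{depth substitutes for the width} that the classical theorem would otherwise require.

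First I would fix a target $f \in C(K)$ and, given $\varepsilon > 0$, choose $N$ and coefficients so that $\|f - g_N\|_\infty < \varepsilon$. Next I would construct an explicit LIES network computing $g_N$. The key observation is that the fully dense residual connections, together with the appended all-ones channel, make a copy of the raw input $\mathbf{x}$ and of the constant $1$ available as inputs to every layer; consequently the pre-activation feeding any neuron can be made an arbitrary affine function of $\mathbf{x}$. At layer $k$ the single $S$-neuron is then fed $\mathbf{w}_k \cdot \mathbf{x} + b_k$ and emits the fresh term $\sin(\mathbf{w}_k \cdot \mathbf{x} + b_k)$, while an $I$-neuron together with the skip paths forms an ``identity bus'' that carries the running partial sum $\sum_{j<k} c_j \sin(\cdot)$ forward and adds the newly produced term. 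After $O(N)$ layers the final linear read-out $\hat{y} = \mathbf{W}^{(L)} \mathbf{z}^{(L-1)}$ outputs exactly $g_N$, so a LIES network of depth $L \gtrsim N$ lies within $\varepsilon$ of $f$, which establishes density.

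As a robustness check I would sketch a second construction that does not rely on $\sin$: on a compact set of positive inputs, the composition $\mathbf{x} \mapsto E\!\left(\sum_j a_j\, L(x_j)\right) = \prod_j x_j^{a_j}$ computes an arbitrary monomial, depthwise accumulation along the identity bus assembles an arbitrary polynomial, and the Weierstrass approximation theorem again supplies density. This shows that universality is not an artifact of one particular primitive and that the $L$, $E$, $S$ activations each give an independent route to the result.

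The step I expect to be the main obstacle is the bookkeeping forced by the fixed width of four: I must verify rigorously that one $S$-neuron (respectively one $L$/$E$ pair) per layer, combined with the residual wiring, can simultaneously inject a new linear feature of $\mathbf{x}$, preserve the constant channel, and accumulate the running sum without the nonlinearities contaminating the carried terms. A secondary technical point is the domain of the logarithm and the bounded $L$ and $E$ variants used during training: I would either restrict attention to a compact domain on which the bounded primitives coincide with $\log$ and $\exp$, or absorb their truncation error into the $\varepsilon$-budget, and default to the $\sin$-based construction---which carries no positivity restriction---whenever the inputs are not sign-definite.
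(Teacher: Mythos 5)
Your proof is sound, but it takes a genuinely different route from the paper's. The paper argues universality by \emph{emulation of MLPs}: it shows (via explicit wiring diagrams) that a LIES block can approximate a sigmoid activation, so any $L$-layer perceptron can be transcribed into an equivalent LIES network, and universality is inherited from the classical sigmoid-MLP theorem; it then remarks, without detail, that a polynomial-replication argument would also work. You instead exploit the sine primitive directly: density of finite sums $\sum_k c_k \sin(\mathbf{w}_k\cdot\mathbf{x}+b_k)$ in $C(K)$ (Leshno et al., or simply density of trigonometric polynomials plus $\cos\theta=\sin(\pi/2-\theta)$), followed by an explicit depth-for-width construction in which each layer's single $S$-neuron injects one new sine term and the dense residual connections plus the all-ones channel supply the affine pre-activations and carry the accumulating sum. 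Your secondary monomial route via $E(\sum_j a_j L(x_j))$ coincides with the paper's one-sentence polynomial remark. What each approach buys: the paper's reduction is stronger in the sense that LIES inherits \emph{every} approximation property of MLPs wholesale, but as written it leans on figures and leaves the sigmoid construction informal; your argument is more self-contained and rigorous, avoids the truncated $L$ and $E$ activations entirely (only $S$ and $I$ are unmodified in the paper, so the sine route sidesteps the cutoff issue you correctly flag), and makes the width-4 bookkeeping explicit --- indeed, since the residual connections are dense all the way to the linear read-out, the output layer can collect all $N$ sine terms directly, so even your ``identity bus'' is optional. The one technical point you should still pin down in the monomial route is that inputs to $E$ can exceed the cutoff $x_e$; this is fixable by shifting the pre-activation by a constant $-c$ (available through the ones channel) and rescaling by $e^{c}$ in the final linear layer, but as you note, defaulting to the sine construction avoids the issue altogether.
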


\begin{proof}
We establish the universality of the LIES network by showing that it can emulate any standard multilayer perceptron (MLP) with one or more hidden layers of arbitrary width. Specifically, \autoref{fig:sig}a illustrates how the LIES network can be configured to approximate a sigmoid activation function (Sigmoid block). \autoref{fig:sig}b further demonstrates how an $L$-layer MLP can be systematically transformed into an equivalent LIES network. Additionally, we can prove the universal approximability by showing, through a similar argument, that LIES networks can replicate any polynomial function.
\end{proof}

\begin{figure*}[htbp]
\centering
  \includegraphics[trim=0 0 0 0, clip, width=\linewidth]{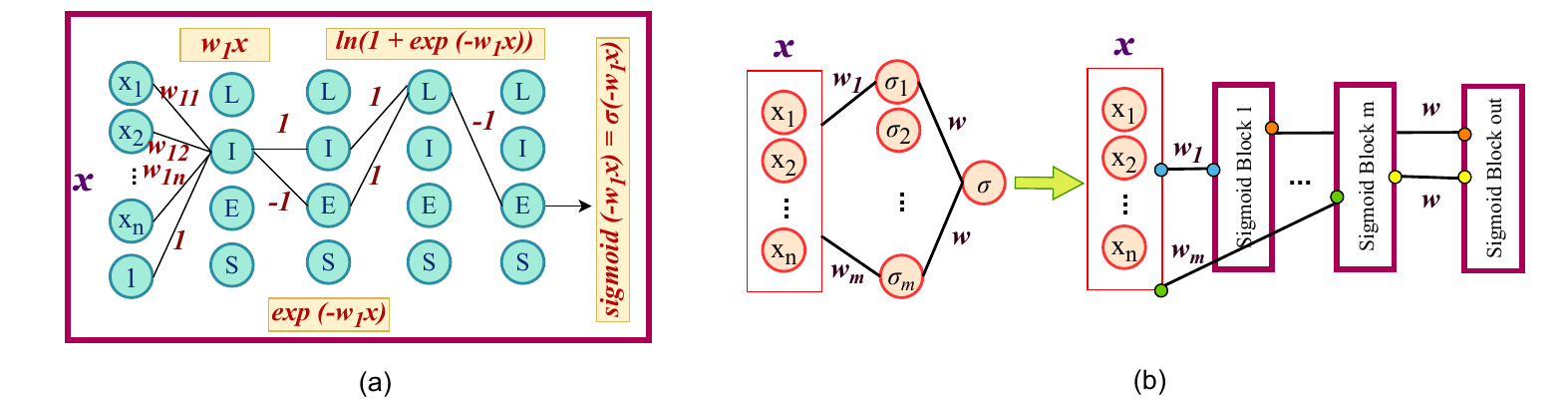}
  \caption{(a) Approximation of the Sigmoid function using a LIES-based configuration. (b) Transformation of an $L$-layer MLP into an equivalent LIES network using stacked LIES layers and Sigmoid blocks.}
  \label{fig:sig}
\end{figure*}

The overall architecture and processing pipeline of our proposed method are illustrated in ~\autoref{fig:pip}.

\begin{figure*}[htbp]
\centering
  \includegraphics[width=0.9\linewidth]{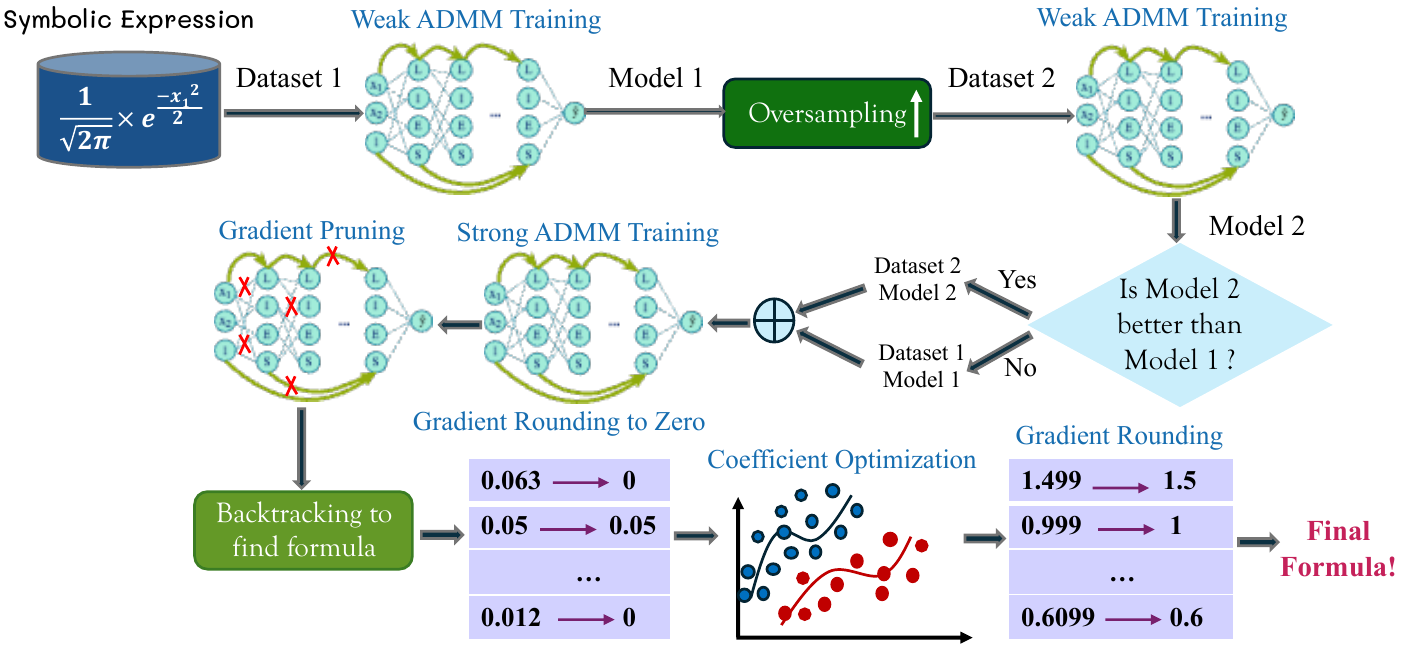}
  \caption{End-to-end pipeline of the proposed framework. The process starts with weak ADMM training on the original dataset, followed by oversampling to improve predictions in high-error regions. A second round of weak ADMM training is performed, followed by strong ADMM training and gradient-based pruning to enforce sparsity. After extracting the symbolic expression, gradient-based rounding to zero removes negligible coefficients, coefficient optimization refines the remaining constants, and a final rounding step ensures symbolic clarity by adjusting constants to cleaner representations.}
  \label{fig:pip}
\end{figure*}

\subsection{Sparsity}\label{subsec:sparsity}

Research on enforcing sparsity in neural networks is gaining more attention, as deep learning models with millions of parameters are becoming increasingly costly and difficult to compute~\cite{marin2023,kim-spars2020}. In the context of SR, to maintain the interpretability and generalizability of the model to unseen data, the system must be guided toward discovering the most concise mathematical expression that accurately represents the data. More specifically, in genetic programming-based methods, this can be achieved by restricting the total number of terms in the generated expression~\cite{cranmer-interp}. 
SINDy~\cite{sindy} enforces sparsity by first representing the system using a set of possible functions and then progressively removing the least important terms until only the most relevant ones remain. Sparsity in reinforcement learning-based symbolic regression is achieved by designing reward functions that penalize complexity, restricting available operators, and guiding the agent toward simpler expressions through policy learning and search constraints~\cite{rl-spars2023,rl-spars2025}. 
In the LIES network, we employ three complementary pruning strategies to promote sparsity: (1) an Alternating Direction Method of Multipliers (ADMM) optimization algorithm that formulates the pruning process as a constrained optimization problem and solves it iteratively to efficiently reduce model parameters while maintaining accuracy~\cite{admm2018}, (2) node pruning which targets the removal of entire neurons that remain active despite having pruned inputs (e.g., $exp(0) = 1$),  
and (3) a gradient-based pruning approach that evaluates the sensitivity of the network’s output to individual weights, pruning those whose removal leads to changes below a predefined threshold. 

(1) \textit{ADMM Weight Pruning}: Our training process formulates an 
ADMM pruning framework~\cite{boyd-admm} that promotes sparsity by penalizing the absolute values ($\ell_1$-norm) of the weights. We define the $\ell_1$-regularized loss minimization problem:
\begin{equation}
\min_W L(W) + \lambda \|W\|_1 
\end{equation}
where $L(W)$ is the original loss function of the network and $\lambda$ is the regularization coefficient controlling the strength of the sparsity penalty.
This problem is difficult to solve due to the non-smoothness of the $\ell_1$-term. 

To address this, we introduce an auxiliary variable $Z$, leading to the optimization:
\begin{equation}
\min_{W, Z} L(W) + \lambda \|Z\|_1 \quad \text{subject to} \quad W = Z, 
\end{equation}
whose augmented Lagrangian is given by:
\begin{equation}
\mathcal{L}_{\rho}(W, Z, U) = L(W) + \lambda \|Z\|_1 + \frac{\rho}{2} \|W - Z + U\|_2^2 - \frac{\rho}{2} \|U\|_2^2
\end{equation}
where $U$ is a scaled dual variable (Lagrange multiplier) to ensure that $W$ and $Z$ remain close, and $\rho$ is a penalty parameter controlling how strongly $W$ and $Z$ should match.

The ADMM algorithm alternates between updating $W$, $Z$, and $U$. For $W$, we use a gradient descent update minimizing the loss function while softly encouraging proximity to a sparse target, without directly imposing sparsity constraints: 
\begin{equation} \label{eq:wupdate}
W^{(k+1)} = \arg\min_W L(W) + \frac{\rho}{2} \|W - Z^{(k)} + U^{(k)}\|_2^2
\end{equation}
The $Z$-update step is a proximal operator update:
\begin{equation}
Z^{(k+1)} = \arg\min_Z \lambda \|Z\|_1 + \frac{\rho}{2} \|W^{(k+1)} - Z + U^{(k)}\|_2^2, 
\end{equation}
which is solved by applying soft thresholding to shrink small values to zero:
\begin{equation}
Z^{(k+1)} = \text{sign}(W^{(k+1)} + U^{(k)}) \cdot \max(|W^{(k+1)} + U^{(k)}| - \frac{\lambda}{\rho}, 0). 
\end{equation}
Finally, the $U$-update step accumulates differences between $W$ and $Z$, ensuring that they gradually converge.
\begin{equation}
U^{(k+1)} = U^{(k)} + (W^{(k+1)} - Z^{(k+1)})
\end{equation}

(2) \textit{Node Pruning}: While weight pruning effectively eliminates individual terms within activations, it does not always remove entire neurons. This can be particularly problematic for two of our primitive activations (logarithm (L) and exponential (E)) that produce non-zero outputs even when all input terms are set to zero. 
Consequently, such neurons can continue to propagate non-zero signals to subsequent layers as long as they retain any output connection. More specifically, the exponential unit with zero input yields an output of $1$, and the logarithmic unit with zero input---being below the cutoff threshold $x_l$---outputs $\log(x_l)$, rather than being suppressed entirely. To mitigate this issue, we introduce a node pruning mechanism designed to completely eliminate unused activations. Our approach reduces node pruning to a structured form of edge pruning by introducing auxiliary edges. Specifically, each neuron in the neural network is split into two sequential sub-neurons which have the original activation (i.e., L, E, S or I) followed by an identity (I) activation. 
Only a single auxiliary edge connects the original activation and the Identity activation neurons, and this edge is then connected to all subsequent layers (\autoref{fig:nodeprune}). We incorporate this into our ADMM formulation by imposing a constraint on the number of non-zero auxiliary edges, effectively controlling the number of active nodes in the network. 
When an auxiliary edge is pruned to zero, the corresponding original activation is completely removed from the LIES network. This enables the model to not only learn sparse combinations of terms but also to discard entire neurons that do not contribute meaningfully to the symbolic expression.

\begin{figure}[htbp]
\centering
  \includegraphics[trim=0 0 0 0, clip, width=1\linewidth]{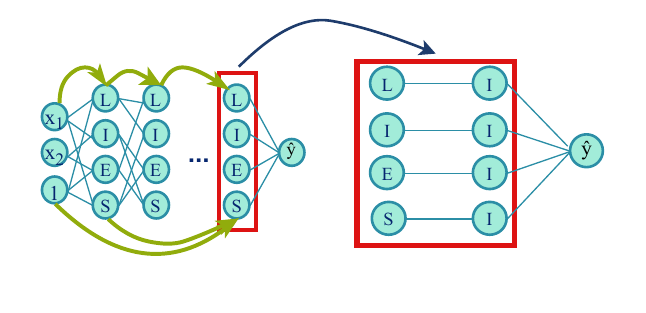}
  \caption{Configuration of the node pruning in LIES network.}
  \label{fig:nodeprune}
\end{figure}

(3) \textit{Gradient-based Pruning}: As a final pruning step, applied after weight and node pruning and once the model has been trained, we introduce a gradient-based mechanism to further refine the network structure. 
This approach leverages the intuition of first-order Taylor expansion, similar to prior sensitivity-based pruning methods~\cite{grad-prune2016,brain-damage}, but differs in that it approximates the change in the network’s output rather than the loss function when a parameter is set to zero. Theoretical justification is provided in~\autoref{thm:output-taylor}, which establishes a first-order bound on output variation resulting from parameter perturbation. This formulation allows us to quantify each parameter’s influence and prune those whose removal leads to negligible changes in output, thereby enhancing sparsity while preserving the functional fidelity of the symbolic expression. 

\begin{theorem}[Gradient-based Pruning/Rounding]
\label{thm:output-taylor}
Let \( f : \mathbb{R}^{p + d} \rightarrow \mathbb{R} \) be a function of \( p \) parameters and an input \( \mathbf{x} \in \mathbb{R}^d \) (representing the variables), with continuous partial derivatives with respect to a parameter \( P_i \). If
\[
\left\lvert h \cdot \frac{\partial f(P_i; \mathbf{x})}{\partial P_i} \right\rvert < \epsilon \quad \forall P_i \in [\alpha, \alpha + h], \, \mathbf{x},
\]
for some small \( \epsilon > 0 \), then
\[
f(P_i = \alpha + h; \mathbf{x}) \approx f(P_i = \alpha; \mathbf{x}).
\]

\end{theorem}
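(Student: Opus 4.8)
The plan is to reduce the multivariate claim to a one–dimensional argument and then apply the Mean Value Theorem (MVT). Since the hypothesis constrains only the partial derivative with respect to the single parameter $P_i$ and is required to hold for every fixed input $\mathbf{x}$, I would first fix an arbitrary $\mathbf{x} \in \mathbb{R}^d$ and define the restricted one–variable function $g(t) = f(P_i = t; \mathbf{x})$ on $[\alpha, \alpha + h]$. By assumption $f$ has a continuous partial derivative with respect to $P_i$, so $g$ is continuously differentiable on $[\alpha, \alpha+h]$, and the hypotheses of the MVT are met.

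The key step is the application of the MVT to $g$: there exists $\xi \in (\alpha, \alpha + h)$ with $g(\alpha + h) - g(\alpha) = h\, g'(\xi)$, that is,
\[
f(P_i = \alpha + h; \mathbf{x}) - f(P_i = \alpha; \mathbf{x}) = h \cdot \frac{\partial f(P_i; \mathbf{x})}{\partial P_i}\Big|_{P_i = \xi}.
\]
Taking absolute values and noting that $\xi \in (\alpha, \alpha+h) \subseteq [\alpha, \alpha+h]$, the uniform bound in the hypothesis applies at $P_i = \xi$, yielding
\[
\bigl| f(P_i = \alpha + h; \mathbf{x}) - f(P_i = \alpha; \mathbf{x}) \bigr| = \left\lvert h \cdot \frac{\partial f}{\partial P_i}\Big|_{\xi} \right\rvert < \epsilon.
\]
Because $\mathbf{x}$ was arbitrary, this bound holds uniformly over all inputs, which is exactly the precise meaning I would attach to the informal conclusion $f(P_i = \alpha+h; \mathbf{x}) \approx f(P_i = \alpha; \mathbf{x})$.

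A cleaner alternative that avoids the existential point $\xi$ is to integrate: assuming $h>0$ as the interval notation indicates, the fundamental theorem of calculus gives $g(\alpha+h) - g(\alpha) = \int_\alpha^{\alpha+h} g'(t)\,dt$, and since $|g'(t)| < \epsilon/h$ throughout the interval, the integral is bounded in absolute value by $h \cdot (\epsilon/h) = \epsilon$. This variant makes the uniform bound transparent and emphasizes that the error accumulates continuously rather than being witnessed at a single interior point.

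The mathematics is elementary, so the main obstacle is not analytical difficulty but precision of statement: the conclusion is written with ``$\approx$'', and the real work is pinning this down as the rigorous inequality $|f(\alpha+h;\mathbf{x}) - f(\alpha;\mathbf{x})| < \epsilon$ holding uniformly in $\mathbf{x}$. I would also be explicit that the hypothesis is a \emph{uniform} bound --- over all $P_i$ in the interval \emph{and} all $\mathbf{x}$ --- since the MVT evaluates the derivative at an a priori unknown point $\xi$, and it is precisely this uniformity that licenses the conclusion regardless of where $\xi$ lands. Connecting back to the application, taking $\alpha = 0$ (pruning) or rounding $\alpha+h$ to a clean constant (rounding) shows that any parameter whose scaled sensitivity stays below $\epsilon$ can be zeroed out or rounded with provably negligible effect on the network output, which is the operational guarantee the framework relies on.
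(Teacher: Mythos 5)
Your proof is correct and takes essentially the same route as the paper's: fix $\mathbf{x}$, restrict $f$ to the one-variable function $g_{\mathbf{x}}(t) = f(P_i = t;\mathbf{x})$, apply the Mean Value Theorem on $[\alpha,\alpha+h]$, and invoke the uniform derivative bound to conclude $\lvert f(\alpha+h;\mathbf{x}) - f(\alpha;\mathbf{x})\rvert < \epsilon$ for every $\mathbf{x}$. The only cosmetic difference is that you apply the hypothesis directly at the MVT point $\xi$, while the paper first passes through $\max_{y\in[\alpha,\alpha+h]}\lvert g_{\mathbf{x}}'(y)\rvert$ and a threshold $\tau$; your integral variant is a nice alternative but not a genuinely different argument.
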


\begin{proof}
Let $g_{\mathbf{x}}(y) = f(P_i = y; \mathbf{x})$.
Then, by the mean value theorem, there exists \( \alpha_0 \in [\alpha, \alpha+h] \) such that
\[
g_{\mathbf{x}}(\alpha+h) - g_{\mathbf{x}}(\alpha) = h \cdot g_{\mathbf{x}}'(\alpha_0).
\]
As a result, we have the bound
\[
|g_{\mathbf{x}}(\alpha+h) - g_{\mathbf{x}}(\alpha)| = |h \cdot g_{\mathbf{x}}'(\alpha_0)| \leq h \cdot \max_{y \in [\alpha, \alpha+h]} |g_{\mathbf{x}}'(y)|,
\]
where the maximum exists due to the continuity of \( g_{\mathbf{x}}' \) on the closed interval.
If this upper bound is smaller than a predefined threshold \( \tau \), i.e.,
\[
h \cdot \max_{y \in [\alpha, \alpha+h]} |g_{\mathbf{x}}'(y)| < \tau,
\]
then the change in the function value is considered negligible, and we may approximate:
\[
g_{\mathbf{x}}(\alpha+h)  \approx g_{\mathbf{x}}(\alpha) .
\]
Moreover, if the above condition holds for all \( \mathbf{x} \in S \), where \( S \) denotes the set of all input samples to the network, then we approximate:
\[
f(\alpha + h) \approx f(\alpha).
\]
\end{proof}

To quantify the effect of individual parameters, let \( w_i \) denote a weight in the trained model. We define its importance as the change in the network’s output when it is set to zero for a given input $\mathbf{x}$:
\[
|\Delta O(w_i, \mathbf{x})| = |O(w_i = 0, \mathbf{x}) - O(w_i, \mathbf{x})|
\]
Here, \( O(w_i, \mathbf{x}) \) denotes the output of the network when \( w_i \) is active, and \( O(w_i = 0, \mathbf{x}) \) denotes the output when \( w_i \) is pruned.
By \autoref{thm:output-taylor}, this change is bounded as:
\[
|\Delta O(w_i, \mathbf{x})| \leq |w_i| \cdot \max_{s \in [0, w_i]} \left| \frac{\partial O(s, \mathbf{x})}{\partial w_i} \right|.
\]
Since the gradient \( \frac{\partial O}{\partial w_i}(w_i, \mathbf{x}) \) is already computed during backpropagation, we avoid additional evaluations by approximating the maximum with the gradient at the current weight value:
\[
\left| \frac{\partial O}{\partial w_i}(s, \mathbf{x}) \right| \approx \left| \frac{\partial O}{\partial w_i}(w_i, \mathbf{x}) \right|.
\]
Thus, if the product \( |w_i \cdot \frac{\partial O}{\partial w_i}(w_i, \mathbf{x})| \) is smaller than a predefined threshold for all input samples $\mathbf{x}$, we prune \( w_i \) as its influence on the output is considered negligible.

\subsection{Oversampling}

To improve symbolic recovery and predictive accuracy, our methodology incorporates an oversampling step that targets regions of the input space where the model exhibits the largest prediction errors. By introducing additional training samples in these high-error regions, the model is encouraged to refine its predictions and generalize more effectively. 
The oversampling procedure is thoroughly explained in~\autoref{alg:oversampling} where we set \texttt{max\_iter}$=2$, $P=30$ and $k=8$. 
The training process consists of two rounds of weak ADMM training—one before and one after the oversampling phase—followed by a final round of strong ADMM training. Symbolic expressions are then extracted for evaluation, with oversampling contributing to higher $R^2$ scores and more accurate symbolic solutions (explained in \autoref{subsec:metrics}).

\begin{algorithm}
\caption{Oversampling}
\label{alg:oversampling}
\begin{algorithmic}[1]
\State Initialize dataset $\mathcal{D}_1$ with $n$ input variables
\State Model $M_{1} \gets \text{weakADMM}(\mathcal{D}_{1})$
\State Divide the range of each variable into $k$ equal-width bins
\State Construct the $k^n$ grid over the input space
\For{each bin $b \in \{1, \dots, k\}^n$}
    \State Error$[b] \gets 1 - R^2$ of $M_1$ in $b$
\EndFor
\State $E_1 \gets $ mean(Error)
\For{$t = 1$ to \texttt{max\_iter}}
    \State $b^* \gets \arg\max $ Error$(b)$
    \State $\mathcal{D}_{t+1} \gets \mathcal{D}_{t} \cup P \%  \text{ additional points in } b^*$
    \State $M_{t+1} \gets \text{weakADMM}(\mathcal{D}_{t+1})$
    \State Compute total error $E_{t+1}$ using $M_{t+1}$ 
    \If{$E_{t+1} \geq E_t$}
        \State \Return model $M_t$, dataset $\mathcal{D}_t$
    \EndIf
\EndFor
\State \Return Model $M_{\texttt{max\_iter}+1}$, dataset $\mathcal{D}_{\texttt{max\_iter}+1}$
\end{algorithmic}
\end{algorithm}

\subsection{Gradient-based Rounding}\label{subsec:round}

After all pruning steps are complete, we backtrack through the remaining network weights to recover the symbolic formula. To further simplify the resulting expression, we apply a rounding step to the constants \( c_i \) within the symbolic formula. 
By \autoref{thm:output-taylor}, the change in the symbolic function \( f \) due to modifying a constant is bounded by a first-order approximation. Specifically, we estimate the effect of replacing each \( c_i \) with a  rounded value \( r_i \) (e.g., rounding to the closest first decimal place) as:
\[
|f(c_i) - f(r_i)| \leq |c_i - r_i| \cdot \max_{s \in [r_i, c_i]} \left| \frac{\partial f}{\partial c_i}(s) \right|.
\]
To avoid computing gradients at multiple points, we approximate the maximum by evaluating the derivative at the rounded value \( r_i \), yielding:
\[
|f(c_i) - f(r_i)| \approx |c_i - r_i| \cdot \left| \frac{\partial f}{\partial c_i}(r_i) \right|.
\]
If this estimated change is smaller than a predefined threshold for all input samples, we replace \( c_i \) with the rounded value \( r_i \), as its influence on the function is negligible.

We apply gradient-based rounding twice during the final stages of formula extraction. First, we perform an initial rounding to zero before coefficient optimization (\autoref{subsec:optim}) to eliminate unnecessary terms, enhance sparsity, and reduce expression complexity, which in turn improves the efficiency of the optimization step. 
After optimization, we apply a second gradient-based rounding pass to check if any constants can be rounded to at most one decimal place, to obtain a cleaner symbolic formula.

\subsection{Coefficient Optimization}\label{subsec:optim}

Following the pruning and rounding steps, the symbolic structure of the expression is fixed, and we proceed to optimize the remaining numerical constants. We frame this as a regression problem and utilize least squares fitting to minimize the discrepancy between the obtained formula and the ground truth by adjusting only the constants within the expression. This refinement ensures that the final symbolic formula achieves a close fit to the underlying data after pruning.

\subsection{Loss Function and Training}\label{subsec:loss}

Since a broad class of analytic expressions in nature can be written as multiplicative, divisive, or exponential functions of the underlying variables, applying a logarithmic transformation makes them easier to approximate using linear or polynomial models. 
This transformation enables the LIES architecture to match the performance achieved in the original input space, while requiring fewer layers—thereby improving training efficiency and reducing the risk of overfitting. We 
apply max normalization to the inputs to accelerate convergence and simplify the reverse-scaling process, ultimately aiding the recovery of interpretable symbolic expressions.

We define the objective function as a sum of four terms, which are detailed below. 
\newline
\textit{$\mathit{1}$. Exponential loss}: 
The principal part of the objective function is the exponential loss, which resembles the Mean Absolute Error (MAE) but operates in the exponential domain, as all inputs have been transformed into the log space. The exponential loss function is defined as:
\begin{equation}
\mathcal{L_{\text{exp}}} = \frac{1}{N} \sum_{i=1}^{N} \exp\left( \left| \hat{y}_i - y_i \right| \right),
\end{equation}
where $\hat{y}$ is the predicted value, ${y}$ is the target value, and $N$ is the batch size.

\textit{$\mathit{2}$. Sparsity-inducing loss}:
As mentioned in \autoref{subsec:sparsity}, promoting interpretability is pivotal in designing symbolic regression models, which can be achieved by inducing sparsity in the network. Sparsity can be enforced through two steps. 
Firstly, as in \eqref{eq:wupdate}, we incorporate an additional term derived from the ADMM framework to encourage weights of the network toward zero:
\begin{equation}
\mathcal{L_{\text{ADMM}}} =\frac{\rho}{2} \|W - Z^{(k)} + U^{(k)}\|_2^2,
\end{equation}
where $W$ is the actual weight, $Z$ is an auxiliary variable to facilitate optimization, $U$ is a dual variable to ensure $W$ and $Z$ remain close, and $\rho$ is a penalty parameter.

Secondly, we include an $\ell_1$ regularization term as a global push towards sparsity to support faster convergence, especially in early training stages. Here, $w_i$ represents each scalar weight parameter in the network:
\begin{equation}
\mathcal{L}_{\ell_1} =\alpha \sum_{i} |w_i|.
\end{equation}

\textit{$\mathit{3}$. Activation function loss}:
Due to their inherent mathematical properties, logarithmic and exponential functions pose some challenges, limiting their suitability to be used as activation functions. The rapid growth of the exponential function and its derivatives at high input values can lead to instability during gradient descent. A similar issue arises with the derivatives of the logarithm function at near-zero inputs. Also, the logarithm is undefined unless the inputs are strictly positive. To handle these issues, we need to modify these two functions while preserving their differentiability during gradient descent. Therefore, we apply a cutoff to these functions and use the following logarithmic and exponential functions for activations:
\begin{center}
\begin{tabular}{cc}
$
\operatorname{L}(x)= \begin{cases}\ln(x) & x>x_{l} \\ \ln(x_l) & \text{ else }\end{cases} 
$ &
$
\operatorname{E}(x)= \begin{cases}\exp(x) & x<x_{e} \\ \exp(x_e) & \text{ else }\end{cases} 
$
\end{tabular}
\end{center}
where we select $x_{l}=5e^{-3}$ and $x_{e}=4$.

To encourage the inputs of each neuron to remain within the valid domain of the logarithmic and exponential functions, we add a masked auxiliary loss that penalizes inputs falling below the predefined cutoff values for both functions:
\begin{equation}
\begin{aligned}
\mathcal{L}_{\text{mask}} = &\sum_{i} \mathds{1}_{\{x_{i,\log} < x_l\}} \cdot \left| x_l - x_{i,\log} \right| \\
&+ \sum_{i} \mathds{1}_{\{x_{i,\exp} > x_e\}} \cdot \left| x_e - x_{i,\exp} \right|
\end{aligned}
\end{equation}
where $x_{i,\log}$ and $x_{i,\exp}$ denote the inputs of sample $i$ to the logarithmic and exponential activation functions, respectively. 

Finally, the total loss function to be optimized in the proposed framework is:
\begin{equation}
    \mathcal{L}_{\text{total}} = \mathcal{L}_{\text{exp}} + \mathcal{L}_{\text{ADMM}} + \mathcal{L}_{\ell_1} + \mathcal{L}_{\text{mask}}
\end{equation}

\section{Experiments and Results}\label{sec:experiments}

\subsection{Dataset and Model}\label{subsec:data}
We evaluate the proposed LIES architecture on the AI Feynman dataset~\cite{aifeynman}, a well-established benchmark for symbolic regression consisting of 100 physics-based equations derived from the Feynman Lectures. Each task provides a numerically generated dataset along with its ground truth symbolic expression, incorporating operations such as polynomials, trigonometric functions, exponentials, and logarithms. 
The equations involve between 1 and 7 input variables. This dataset is also included in SRBench~\cite{lacava}, a recent standardized benchmark suite for symbolic regression, which we use to compare our method against established baseline approaches. 
To reduce training time, we use only 10\% of the available data for each task during training (100k samples). Note that baseline methods are evaluated using the full dataset, which may give them an advantage in terms of data availability. As mentioned in \autoref{subsec:loss}, the data is first transferred to the log space and then max normalized.

If the number of input variables in a formula is $n$, the LIES network is configured with $n+1$ LIES layers. 
We focus on the formulae with four or fewer input variables. 
Higher-dimensional formulae require deeper networks with more parameters, which makes pruning more challenging and hinders the recovery of sparse symbolic expressions~\cite{interactive2025}. 
We also exclude formulae involving trigonometric functions that have negative inputs, as our use of log-space transformations can lead to instability in such cases. Please refer to~\autoref{sec:discuss} for a detailed discussion. Therefore, we conduct our experiments on 61 formulae in the AI Feynman dataset. Each experiment is conducted across three independent trials to account for the stochastic nature of training and ensure the consistency and robustness of the recovered expressions. Each of the weak ADMM training parts is run for 20 epochs and the strong ADMM is run for 30 epochs. The whole pipeline takes around 10 minutes to run on a single NVIDIA RTX A5000 GPU for a single trial. The time complexity of each ADMM training phase is $\mathcal{O}(E \cdot B \cdot P)$, where $E$ denotes the number of training epochs, $B$ is the number of mini-batches per epoch, and $P$ is the total number of trainable parameters in the model. During the oversampling phase, this training process is repeated up to a maximum of \texttt{max\_iter} times, resulting in an additional runtime of at most $\mathcal{O}(E \cdot B \cdot P \cdot \texttt{max\_iter})$, which is additive to the overall complexity. The gradient pruning step has a time complexity of $\mathcal{O}(P)$, where $P$ is the total number of trainable parameters. The time complexity of gradient rounding is $\mathcal{O}(C \cdot N)$, where $C$ is the number of constants in the expression and $N$ is the number of data points. This reflects the cost of evaluating gradient expressions across the dataset to determine which constants can be safely rounded.
The coefficient optimization step has a time complexity of $\mathcal{O}(I \cdot C \cdot N)$, where $I$ is the number of optimization iterations, $C$ is the number of numerical constants being optimized, and $N$ is the number of data points. Therefore, the overall time complexity of the proposed pipeline is $\mathcal{O}\left( E \cdot B \cdot P \cdot (\texttt{max\_iter} + 3) + I \cdot C \cdot N \right)$. The ADMM-based training and oversampling dominate runtime, while coefficient optimization and the two rounding stages contribute linearly with respect to the number of constants and dataset size.

We use the RMSprop optimization algorithm with a learning rate of $1.5 \times 10^{-2}$. In the weak ADMM training phases, we set the penalty parameter $\rho = 0.5$ and the regularization coefficient $\lambda = 5 \times 10^{-4}$. During the strong ADMM training phase, we use a smaller penalty parameter $\rho = 0.005$ and adapt the regularization coefficient as $\lambda = 5 \times 10^{-(n-1)}$, where $n$ is the number of input variables in the target expression. For the gradient-based pruning step, we apply a sensitivity threshold of $0.01$ and in the gradient-based rounding step, we use a threshold of $0.1$ to simplify small coefficients while maintaining functional fidelity.
 All our code is publicly available\footnote{\url{https://github.com/MansoorehMontazerin/LIES}}.

\subsection{Evaluation Metrics}\label{subsec:metrics}

We evaluate our method using the $R^2$ score
\[
R^2 = 1 - \frac{\sum_{i=1}^{N} (y_i - \hat{y}_i)^2}{\sum_{i=1}^{N} (y_i - \bar{y})^2}, 
\]
and the symbolic solution rate (SSR):
\[
\text{SSR} = \frac{1}{N} \sum_{i=1}^{N} \mathds{1}_{\{\texttt{sym\_solution}_i = \text{True}\}},
\]
where $\hat{y}_i$ is the predicted value, $y_i$ is the target value and $\bar{y}$ is the mean over all target values. 
SSR measures how frequently the model successfully recovers an expression that is symbolically equivalent to the ground-truth formula. By symbolically equivalent, we mean that the discovered expression is mathematically identical to the ground-truth formula if we add or multiply a constant to it. 
For each trial, a binary flag (\texttt{sym\_solution}) indicates whether the recovered expression is symbolically correct. SSR is computed as the proportion of such successful cases across all trials and all formulae ($N$), providing a strict assessment of the model’s ability to recover exact symbolic representations, rather than just numerically accurate approximations.


\subsection{Performance Comparison with Baseline Methods}\label{subsec:comparison}

We compare LIES with the SRBench methods in terms of the frequency with which the model achieves a solution with $R^2>0.99$ and the SSR. For a fair comparison, we evaluate all baseline methods on the same subset of 61 formulae used in our experiments. \autoref{fig:r2} reports the mean and standard deviation of the accuracy ($R^2>0.99$) across trials and \autoref{fig:ssr} presents the results for the SSR. The $R^2$ score is computed by evaluating the final recovered equation on the input data and comparing its predictions to the ground-truth outputs. Notably, LIES consistently achieves a higher SSR compared to both DL-based methods, such as DSR, and GP-based methods, including gplearn and GP-GOMEA. This highlights the model’s ability to recover symbolically correct expressions rather than simply overfitting to data. On the other hand, LIES shows moderately lower accuracy in some cases. This discrepancy is primarily due to minor deviations in the recovered constant values—while the overall symbolic structure may be correct, small offsets in constants can reduce the test $R^2$ below the threshold, particularly for expressions where constants significantly affect output magnitude.

\begin{figure}[htbp]
\centering
  \includegraphics[width=0.85\linewidth]{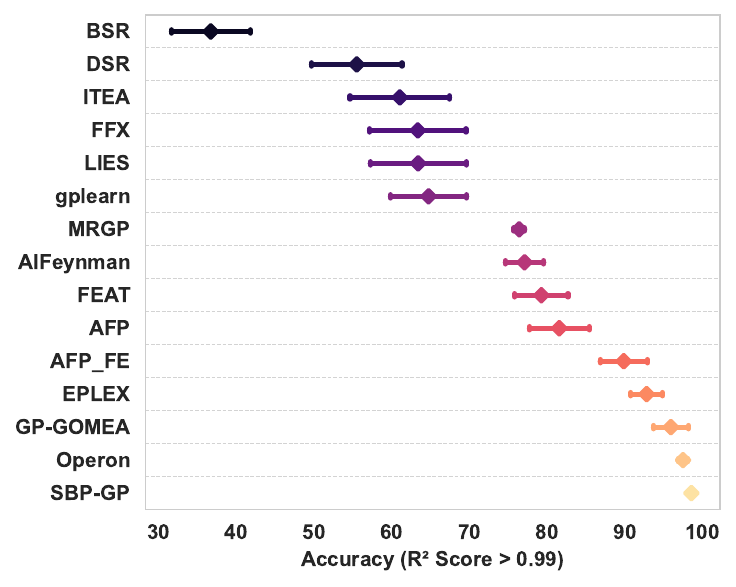}
  \caption{Mean and standard deviation of the accuracy ($R^2>0.99$) across trials in LIES and the SRBench models on 61 equations from the Feynman dataset.}
  \label{fig:r2}
\end{figure}

\begin{figure}[htbp]
\centering
  \includegraphics[width=0.85\linewidth]{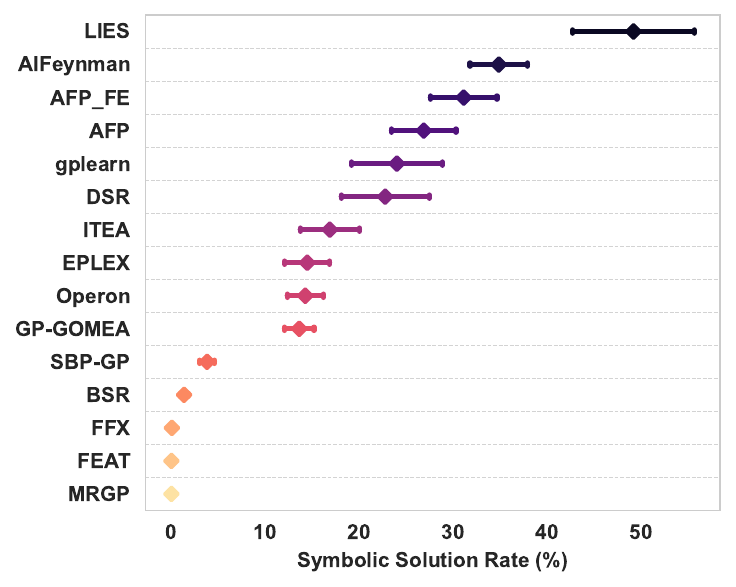}
  \caption{Mean and standard deviation of the SSR across trials in LIES and the SRBench models on 61 equations from the Feynman dataset.}
  \label{fig:ssr}
\end{figure}

\begin{table*}[htbp]
\centering
\caption{Performance comparison of LIES with different configurations within a fixed deadline. All values are reported as percentages averaged across trials.}
\label{tab:ablation}
\begin{tabular}{c c c c c}
\hline
Components & Sym Solution True  & Sym Solution False  & Out-of-Time Error  &  $R^2 > 0.99$ \\
\hline
\textbf{Main Pipeline} & 98.9& 0& 1.1& 88.9\\
w/o Oversampling & 70 & 20 & 10 & 66.7 \\
w/o Gradient Pruning & 0 & 0 & 100 & N/A \\
w/o Gradient Rounding to Zero   & 34.4 & 54.5 & 11.1 & 27.8 \\
w/o Optimization & 20 & 76.7 & 3.3 & 23.3 \\
\hline
\end{tabular}
\end{table*}

\subsection{Ablation Study}\label{subsec:ablation}

To evaluate the contribution of each component in our symbolic regression pipeline, we conduct an ablation study. By systematically removing individual modules and observing the resulting performance degradation, we demonstrate that each part of the pipeline plays a critical role in ensuring the efficiency and sparsity of the final solution. The key components of our symbolic regression pipeline, which are the focus of our study, are as follows: \textit{1. Oversampling, 2. Gradient Pruning, 3. Gradient Rounding to Zero, and 4. Constant Optimization}. 
We pick the formulae where at least one trial yielded a correct symbolic solution under the full pipeline configuration (30 out of 61), so that we could evaluate the importance of each component. We consider three outcomes: (a) the recovered symbolic solution is correct, (b) the recovered solution is incorrect, and (c) an out-of-time error occurs (defined as the program exceeding five times the typical runtime when all modules are active). 
For cases where the solution is either correct or incorrect, we additionally report the frequency of achieving a test set $R^2 > 0.99$. \autoref{tab:ablation} summarizes the results of the ablation study described above. 
Note that the ``Symbolic Solution True" column in \autoref{tab:ablation} corresponds to the SSR metric discussed earlier. Unlike the earlier SSR metric, the table now distinguishes between failures due to incorrect symbolic solutions (``False") and those caused by exceeding the runtime limit (``Out-of-Time Error"), offering a more detailed breakdown of the pipeline performance.
In what follows, we analyze the results of removing each of the key components listed in \autoref{tab:ablation} from the pipeline, highlighting their individual contributions to the overall performance.

\textbf{$\mathbf{1}$. Oversampling}  The oversampling step targets regions of the input space where the model exhibits the largest prediction errors. By providing additional training samples in these regions, oversampling helps the model refine its predictions, leading to more accurate symbolic solutions and higher $R^2$ scores. As shown in \autoref{tab:ablation}, removing the oversampling module results in a 20\% drop in SSR and a 22\% reduction in the frequency of achieving $R^2>0.99$.

\textbf{$\mathbf{2}$. Gradient Pruning}  The gradient pruning module serves as a complementary pruning method to ADMM-based pruning. While ADMM encourages sparsity by penalizing the magnitude of weights, gradient pruning eliminates additional weights by evaluating the importance of their gradients. 
As shown in \autoref{tab:ablation}, removing this component significantly degrades performance and leads to out-of-time error for all the formulae. This is because, without gradient pruning, many small and unnecessary weights remain in the network, leading to excessively large and dense formulae populated with negligible coefficients, which compromise the sparsity and clarity of the recovered expressions. Moreover, this directly impacts the efficiency of the gradient-based rounding to zero step, as the presence of many non-zero but unimportant weights increases the computational cost of symbolic differentiation, resulting in frequent out-of-time errors. We report ``N/A" (not applicable) for the $R^2 > 0.99$ metric since the out-of-time error hinders computation of $R^2$ scores.

\textbf{$\mathbf{3}$. Gradient Rounding to Zero}  This module implements a specialized form of gradient-based rounding designed to eliminate extremely small coefficients from the recovered symbolic expression. Its primary purpose is to prevent large computational slowdowns in the subsequent coefficient optimization step, where numerous insignificant coefficients can increase the cost of the regression-based fitting procedure. This rounding method evaluates the impact of removing a coefficient by computing the product of the symbolic derivative of the expression with respect to that coefficient (evaluated at the rounded value) and the difference between the original and rounded coefficient. If this product is sufficiently small, the coefficient is set to zero. As shown in \autoref{tab:ablation}, removing this component severely degrades performance, resulting in a nearly 60\% drop in both SSR and accuracy.

\textbf{$\mathbf{4}$. Coefficient Optimization}  The coefficient optimization module is designed to refine the numerical coefficients of the symbolic expression after several pruning steps have been applied. While the raw formula obtained from the model may initially fit the data well, pruning can introduce structural changes that negatively affect its alignment with the data. This step adjusts the remaining coefficients to better align the expression with the data, effectively serving as a lightweight fine-tuning phase. As shown in \autoref{tab:ablation}, removing this component leads to a substantial performance drop, with the SSR decreasing by 80\% and the accuracy by around 65\%.

\paragraph*{Other Pruning Strategies} Since LIES Network has activations with different ranges, a small weight does not necessarily imply a good candidate for pruning. 
We considered using reweighted $\ell_1$ regularization~\cite{reweight2008} with ADMM, which is expected to accommodate such differences better. However, it resulted in poor sparsity and unstable convergence. Our pipeline is already able to handle different ranges through gradient-based pruning that approximates the impact of setting weights to zero instead of pruning all weights smaller than a threshold. 

\section{Discussion and Future Work}\label{sec:discuss}

We proposed LIES, an SR framework that trains a fixed neural network with interpretable activations and applies structured pruning to extract accurate, compact, and human-readable formulae. We evaluated LIES on 61 expressions from the AI Feynman dataset, demonstrating strong performance both symbolically and numerically, and we conducted ablation studies to assess the impact of each component in the pipeline.

The main limitations of our current approach lie in handling more complex formulae in terms of the number of variables and the use of trigonometric functions. Below, we discuss potential directions to address these limitations in the future.

\subsubsection{Dealing with Trigonometric Functions}

To explore the potential of LIES for trigonometric expressions, we conducted a targeted experiment using a synthetic dataset with inputs $x$ and outputs $\cos(x)$. 
The framework successfully represented $\cos(x)$ as $\sin(\pi/2 - x)$, confirming the utility of the sine activation function for handling such expressions. 
However, long formulae including trigonometric functions require deeper LIES networks. Creating multiplications in such deep networks requires going through logarithms. The logarithm of negative outputs of the trigonometric function is currently cut off in our training to ensure real gradients. This makes it difficult to learn formulae involving the multiplication of trigonometric functions with other functions. In future work, we will explore training the network in the complex domain to support the logarithm of negative quantities, and thus support longer formulae with trigonometric functions.
\subsubsection{Further Improvements}

Handling formulae with a large number of variables remains challenging for the LIES network, often requiring more effective pruning and sparsification strategies. Particularly, formulae requiring functions of the sum of products (e.g., $\sqrt{x^2 + y^2 + z^2}$) require deep networks, making them hard to prune and learn. We observed that such formulae are highly sensitive to parameter tuning and prone to unstable training. In future work, we aim to develop improved pruning techniques, investigate architecture search~\cite{darts}, and enhance optimization strategies to enable the inclusion of larger and more complex symbolic formulae.

\section{Conclusion}

In this work, we presented LIES, a structured neural framework for SR that uses interpretable activation functions and targeted pruning strategies to produce compact, accurate expressions. Across a range of benchmark tasks, LIES achieved high $R^2$ scores and effectively recovered ground-truth formulae with minimal complexity. 
Our experiments, including detailed ablation studies, demonstrate the contribution of each component in the pipeline. The multi-stage pruning process, supported by Taylor-based approximations, consistently led to sparser and more meaningful formulae.




\bibliographystyle{IEEEtran}
\bibliography{references}

\end{document}